\documentclass[letterpaper,conference]{IEEEtran}

\usepackage[T1]{fontenc}
\usepackage{cite}
\usepackage{microtype}
\usepackage{graphicx}
\usepackage{subcaption}
\usepackage{rotating}
\usepackage{booktabs}
\usepackage{amsmath,amssymb,amsfonts}
\usepackage{mathtools}
\usepackage{tikz}
\usepackage{accents}
\usepackage{siunitx}
\usepackage{standalone}
\usepackage{pgfplots}
\usepackage{textcomp}
\usepackage{lipsum}
\usepackage{xcolor}
\usepackage{dsfont}
\usepackage{bm}
\usepackage{url}
\usepackage{placeins}
\usepackage[capitalise]{cleveref}
\usepackage{algorithm}
\usepackage[noend]{algpseudocode}
\usepackage{mleftright}

\newif\ifpreprint
\preprinttrue

\mleftright
\pgfplotsset{compat=newest}
\newcommand{\<}{\negmedspace{}}

\renewcommand{\Pr}{\operatorname{Pr}}

\newcommand{\mc}{\mathcal}
\DeclareMathOperator{\E}{\mathbb{E}}

\DeclareMathOperator{\Var}{Var}
\DeclareMathOperator{\supp}{supp}

\DeclarePairedDelimiter{\norm}{\lVert}{\rVert}

\DeclareMathOperator*{\argmax}{arg\,max}

\newtheorem{theorem}{Theorem}
\newtheorem{lemma}{Lemma}

\begin{document}
\bstctlcite{IEEEexample:BSTcontrol}

\title{Multiple Importance Sampling for Guided Speculative Inference}
\author{
    \IEEEauthorblockN{Joseph Rowan and Ashish J. Khisti}\\
    \IEEEauthorblockA{Department of Electrical and Computer Engineering, University of Toronto\\
    Email: joseph.rowan@mail.utoronto.ca, akhisti@ece.utoronto.ca}
}

\maketitle

\begin{abstract}
We study an extension of the \emph{Guided Speculative Inference} (GSI) algorithm \cite{geuter2025a} for test-time-scaling of large language models (LLMs) to a novel setting where an ensemble of smaller LLMs proposes draft reasoning steps which are later verified by a larger base model and a process reward model.
Using the established framework of multiple importance sampling (MIS), we introduce weighting schemes to aggregate samples from a diverse collection of proposal distributions.
We provide new bounds on the KL divergence from the optimal reward-tilted target distribution and the expected reward degradation, then demonstrate that the well-known \emph{balance heuristic} is optimal.
Compared to GSI with a single well-tuned drafter, combining several diverse LLMs while retaining the same total number of samples maintains competitive pass-at-$N$ downstream performance while being mathematically robust to situations where the support of a single draft model does not cover that of the target, where GSI may fail.
\end{abstract}

\section{Introduction}
Scaling computation in the pretraining phase, by increasing both the number of parameters and the size of training data, has driven remarkable improvements in the performance of large language models (LLMs) and achieved state-of-the-art results in many generative natural language tasks \cite{kaplan2020,hoffman2022}.
However, as limits to both computational resources and the data available for pretraining become apparent, \emph{test-time scaling} has emerged as a complementary approach for further improving the performance of an already-trained model by increasing the amount of computation at inference time \cite{muennighoff2025,snell2025,zhang2025a}.
While numerous approaches to test-time scaling exist, most may be categorized as either sequential, parallel or hybrid methods \cite{zhang2025a}.
In sequential approaches, a response is refined iteratively, with later computations building upon previous steps.
On the other hand, parallel methods generate several full responses in a single batch and return the best one as determined by an aggregation function.
Hybrid approaches attempt to combine the best of both worlds by generating multiple candidates at each iteration of a sequential process, then determining the most promising path by means of a search algorithm.

Best-of-$N$ (BoN) sampling is a straightforward scaling approach that can be used in both the parallel and hybrid regimes \cite{gao2023,song2025}.
It involves drawing $N$ samples from the model, evaluating a score for each, and choosing whichever sample has the highest score.
Despite its simplicity, BoN remains popular in practice by virtue of mapping readily to efficient prompt batching, and enjoys desirable theoretical properties while being competitive with more involved and computationally expensive test-time strategies \cite{beirami2025}.
As $N$ grows large however, the cost of BoN sampling may itself become prohibitive, which has motivated the search for methods that perform repeated sampling from a cheaper proxy model coupled with parallel verification by the original, large model to obtain comparable performance with reduced latency, analogous to speculative decoding in standard inference \cite{leviathan2023,liao2025,cemri2025,geuter2025a}.
\emph{Guided Speculative Inference} (GSI) \cite{geuter2025a} is a recent algorithm that uses a small LLM to generate candidates for verification, weighting them in such a way that the output distribution becomes close to the target.
However, as it relies on the fragile assumption that a single drafter's support can cover the target distribution, GSI remains vulnerable to lower acceptance rates during verification and degraded performance if the draft model is chosen poorly.
In this work we motivate the use of an ensemble of draft models to increase robustness, relaxing the coverage requirement to one on the \emph{union} of the drafters' supports by introducing a principled multiple importance sampling (MIS) \cite{elvira2019} framework for resampling the proposal sequences.
\ifpreprint\else
An extended version of the paper including all proofs is provided in \cite{extendedversion}.
\fi

\section{Problem Setup and Related Work}
In this work, following the setting considered in \cite{geuter2025a}, we deal with multi-step reasoning models that structure their response to a prompt as a sequence of \emph{reasoning steps} leading to a final answer.
Assuming a vocabulary $\mc{X}$, we let the set of all finite-length token sequences be $\mc{X}^\ast = \{ x = (x_1, \ldots, x_n) : n \in \mathbb{N}, x_i \in \mc{X} \}$ and the set of probability distributions on $\mc{X}^\ast$ be $\mc{P}(\mc{X}^\ast)$.
Given a prompt $c \in \mc{X}^\ast$, the model furnishes a distribution $\pi(\cdot \mid c) \in \mc{P}(\mc{X}^\ast)$ which can be used to sample the first reasoning step.
In practice, sampling from the block-level distribution $\pi(\cdot \mid c)$ is done autoregressively, outputting one token at a time in each forward pass.
After sampling the first reasoning step $X_1$, we append it to the prompt and sample subsequent blocks $X_2 \sim \pi(\cdot \mid c, X_1)$, $X_3 \sim \pi(\cdot \mid c, X_1, X_2)$ and so on until the end of the full output sequence, which will be signposted by an end-of-sequence token $\mathtt{EOS}$.
We assume the complete response comprises $T$ reasoning steps in total, where $T$ is finite, and the complete response is then the sequence $(X_1, \ldots, X_T)$.

Furthermore, to score the reasoning steps emitted by the LLM, we require the existence of a \emph{process reward model} (PRM), which acts as a step-level proxy for gauging the accuracy of and human preference for the model's outputs \cite{lightman2024}.
The PRM is modeled as a mapping $r : \mc{X}^\ast \times \mc{X}^\ast \to [0,R]$ which, given a potentially incomplete sequence of reasoning steps $(X_1, \ldots, X_t)$ and a prompt $c$, computes a reward $r(X_1, \ldots, X_t \mid c)$.
In practice, the PRM is typically implemented as an additional neural network, and is considered to be an approximation for an underlying true reward function or \emph{gold reward} $r^\ast : \mc{X}^\ast \times \mc{X}^\ast \to \mathbb{R}$ which perfectly captures the desirability of the output \cite{gao2023}.
We focus on devising and analyzing hybrid best-of-$N$ policies which, given $N$ blocks of tokens $X_t^{(1)}, \ldots, X_t^{(N)}$ computed at step $t$ and the associated rewards, aggregate these into a single sample $X_t$ which forms part of the input to the next reasoning step; the distribution over the policy's outputs is denoted as $\tilde{\pi}$.

\subsection{Hard BoN Sampling}
The \emph{hard BoN}, or simply BoN, strategy runs the PRM on each sample and outputs the sample with the maximum reward \cite{beirami2025}.
Formally, at time $1 \leq t \leq T$ and given previously chosen reasoning steps $X_1, \ldots, X_{t-1}$, we sample $X_t^{(1)}, \ldots, X_t^{(N)}$ iid from $\pi(\cdot \mid c, X_1, \ldots X_{t-1})$ and use the PRM to select
\begin{IEEEeqnarray}{c}
    X_t = \argmax_{1 \leq i \leq N} r(X_1, \ldots, X_{t-1}, X_t^{(i)} \mid c).
\end{IEEEeqnarray}
While this is a surprisingly effective model alignment strategy \cite{gao2023,eisenstein2024} when it comes to increasing the accuracy of a model's outputs, it can suffer from a phenomenon known as \emph{reward hacking}.
Since any real-world PRM is an imperfect approximation of the true reward, it is often possible for a model to exploit these imperfections and generate responses that score highly under the PRM but are otherwise low in quality \cite{skalse2022}.
As $N$ increases, hard BoN becomes more susceptible to such failure modes, limiting its ability to scale to large $N$ without eventually degrading performance \cite{huang2025a}.

\subsection{Soft BoN Sampling}
A popular and theoretically grounded approach to devising a policy is to solve a KL-regularized reward optimization problem over $\tilde{\pi}$, which may help prevent reward hacking compared to hard BoN \cite{verdun2025,aminian2026}.
Without loss of generality, we focus in all that follows on generating the first reasoning step $X_1$; for later steps, the partial response can be concatenated with the prompt $c$ and the same procedure repeated using the updated conditional distributions.
Furthermore, we let the conditioning on $c$ be implicit for brevity and omit it.
We solve
\begin{IEEEeqnarray}{c}
    \max_{\tilde{\pi} \in \mc{P}(\mc{X}^\ast)} \bigl[ \E_{X \sim \tilde{\pi}}[r(X)] - D_{\mathrm{KL}}(\tilde{\pi} \Vert \pi) / \beta \bigr]
\end{IEEEeqnarray}
and the objective is therefore to maximize the expected reward while keeping the policy close to the base model $\pi$ in a KL sense, with $\beta > 0$ controlling the strength of the KL regularization.
The optimal policy is to sample $X_1$ following the \emph{reward-tilted distribution}
\begin{IEEEeqnarray}{c}
    \pi^\ast(x) = \pi(x) e^{\beta r(x)} / Z \label{eqn:tilted_distribution}
\end{IEEEeqnarray}
where $Z = \E_{X \sim \pi}[e^{\beta r(X)}]$ \cite{korbak2022}.
However, apart from the special case of sampling exactly one token at a time, we do not have a direct way to sample exactly from \eqref{eqn:tilted_distribution}.
\emph{Soft BoN} sampling \cite{verdun2025} proposes to approximate the reward-tilted distribution by taking $N$ reasoning steps $X_1^{(1)}, \ldots, X_1^{(N)}$ iid from $\pi$, as in hard BoN, and selecting one probabilistically via importance resampling \cite{rubin1987,gelman2013}.
The $i$th sample is assigned a weight $W_i$ and an index $S$ is selected from $\{ 1, \ldots, N \}$ with probabilities proportional to $W_1, \ldots, W_N$ respectively.
The chosen block is then $X_1 = X_1^{(S)}$.
Since the samples are iid from $\pi$, the resampling weights should satisfy \cite{verdun2025}
\begin{IEEEeqnarray}{c}
    W_i \propto \frac{\pi^\ast(X_1^{(i)})}{\pi(X_1^{(i)})} = e^{\beta r(X_1^{(i)})} / Z.
\end{IEEEeqnarray}
The importance resampling step may also be framed as sampling based on the temperature-weighted softmax probability distribution of the rewards.
It was shown in \cite{verdun2025} that the KL divergence between the soft BoN policy and the reward tilted distribution satisfies
\begin{IEEEeqnarray}{c}
    D_{\mathrm{KL}}(\pi^\ast \Vert \tilde{\pi}) \leq \log\biggl( 1 + \frac{\Var_{X \sim \pi}(e^{\beta r(X)})}{N (\E_{X \sim \pi}[e^{\beta r(X)}])^2} \biggr) \label{eqn:sbon_kl_bound}
\end{IEEEeqnarray}
whereas hard BoN does not have such a guarantee.

\subsection{Guided Speculative Inference}\label{sec:gsi}
Inspired by speculative decoding, GSI \cite{geuter2025a} aims to accelerate soft BoN by sampling from a smaller draft model $\pi_s$, then verifying all the samples simultaneously with a larger model $\pi_b$ instead of sampling from the latter directly.
Since \eqref{eqn:tilted_distribution}, with the base model being $\pi = \pi_b$, can be rewritten as
\begin{IEEEeqnarray}{c}
    \pi^\ast(x) = \frac{\pi_s(x)}{Z} \exp\biggl( \beta r(x) + \log \frac{\pi_b(x)}{\pi_s(x)}  \biggr)
\end{IEEEeqnarray}
the tilted distribution may also be approximated by taking $N$ iid samples from $\pi_s$ and using the resampling weights
\begin{IEEEeqnarray}{c}
    W_i \propto \frac{\pi^\ast(X_1^{(i)})}{\pi_s(X_1^{(i)})} = \frac{1}{Z} \exp\biggl( \beta r(X_1^{(i)}) + \log \frac{\pi_b(X_1^{(i)})}{\pi_s(X_1^{(i)})}  \biggr). \IEEEeqnarraynumspace
\end{IEEEeqnarray}
However, we can only hope to approximate $\pi^\ast$ in this manner if the coverage assumption $\supp \pi_b \subseteq \supp \pi_s$ is satisfied.
Under this assumption, as was shown in \cite{geuter2025a},
\begin{IEEEeqnarray}{c}
    D_{\mathrm{KL}}(\pi^\ast \Vert \tilde{\pi}) \leq \log \biggl( \frac{e^{2 \beta R}(\chi^2(\pi_b \Vert \pi_s) + 1)}{N} + \frac{N-1}{N} \biggr). \label{eqn:gsi_kl_bound}
\end{IEEEeqnarray}

\section{Soft Best-of-N with Multiple Draft Models}
We now introduce a novel technique for extending GSI to an ensemble of draft models via MIS \cite{bugallo2017,elvira2019}.
Assume initially that there are $N$ draft models $\pi_s^{(1)}, \ldots, \pi_s^{(N)}$ and the number of draft models is therefore equal to the number of importance samples, which is also $N$.
It is straightforward to extend this to the case where the number of draft models is $K$ and $M$ samples are drawn from each, giving $N = MK$.
Later, we will consider the more general case where the sample allocation is not required to be uniform.
The main advantage of the MIS approach is that it allows the coverage assumption to be relaxed; by weighting the samples appropriately, one only requires 
\begin{IEEEeqnarray}{c}
    \supp \pi_b \subseteq \cup_{1 \leq i \leq N} \supp \pi_s^{(i)}.
\end{IEEEeqnarray}
In practice, this may help avoid the possibility of ending up in a situation where the importance sampling approximation fails to converge to the target distribution.
Furthermore, we expect to see prompts that favor different draft models; some prompts for example may suit a drafter fine-tuned on math while others may better fit one fine-tuned on code, making the idea of combining an ensemble of draft models appealing.

Besides MIS, any number of approaches could be used to combine the drafts.
For example, majority voting \cite{wang2023} might be one alternative choice.
In contrast, our method directly extends the GSI resampling strategy described in \cref{sec:gsi} by using standard MIS resampling weights \cite{wyman2023}
\begin{IEEEeqnarray}{rCl}
    W_i & \propto & \frac{m_i(X_1^{(i)}) \pi^\ast(X_1^{(i)})}{\pi_s^{(i)}(X_1^{(i)})} \\
    & = & \frac{m_i(X_1^{(i)})}{Z} \exp\biggl( \beta r(X_1^{(i)}) + \log \frac{\pi_b(X_1^{(i)})}{\pi_s^{(i)}(X_1^{(i)})} \biggr) \IEEEeqnarraynumspace
\end{IEEEeqnarray}
where $m_i$ is referred to as the $i$th \emph{MIS weight} and is restricted to be a function of $X_1^{(i)}$ only.
Otherwise, $m_i$ can have whatever form we like so long as it satisfies the following canonical requirements for a valid MIS scheme \cite{wyman2023,kondapaneni2019}:
\begin{IEEEeqnarray}{l}
    \pi_b(x) > 0 \implies \textstyle \sum_{i=1}^{N} m_i(x) = 1 \label{eqn:mis_req_1} \\
    \pi_s^{(i)}(x) = 0 \text{ and } \pi_b(x) > 0 \implies m_i(x) = 0. \label{eqn:mis_req_2}
\end{IEEEeqnarray}
After settling on a formula for the MIS weights, we propose the following procedure:
\begin{enumerate}
    \item Sample $N$ reasoning steps $X_1^{(1)}, \ldots, X_1^{(N)}$ independently from $\pi_s^{(1)}, \ldots, \pi_s^{(N)}$.
    \item Compute the MIS weights $m_i(X_1^{(i)})$ together with the resampling weights $W_i = m_i(X_1^{(i)}) g_i(X_1^{(i)})$ for $1 \leq i \leq N$ where
    \begin{IEEEeqnarray}{c}
        g_i(x) = \exp\biggl( \beta r(x) + \log \frac{\pi_b(x)}{\pi_s^{(i)}(x)} \biggr). \label{eqn:tilted_reward}
    \end{IEEEeqnarray}
    \item Let $\Pr[S = i \mid X_1^{(1)}, \ldots, X_1^{(N)}] = W_i / \sum_{j=1}^{N} W_j$ and output $X_1 = X_1^{(S)}$ as the selected reasoning step.
\end{enumerate}

\subsection{Theoretical Analysis}
We set out to show that the MIS schemes enjoy guarantees on the output's closeness to the optimal tilted distribution $\pi^\ast$, similar to the bounds for soft BoN and GSI in \eqref{eqn:sbon_kl_bound} and \eqref{eqn:gsi_kl_bound} respectively.
As a first step, we have the following lemma concerning the output distribution, analogous to \cite[Lem.~1]{verdun2025}.
When presenting our results on MIS, we also adopt the convention that $m_i(x) / \pi_s^{(i)}(x) = 0$ whenever both are zero.
We provide a proof in \ifpreprint \cref{sec:mis_output_dist_proof}. \else \cite[App.~A]{extendedversion}. \fi

\begin{lemma}\label{lem:mis_output_dist}
Let $\tilde{\pi}$ be the output distribution of a valid MIS scheme with $N$ samples.
Then, for all $x \in \supp \pi_b$,
\begin{multline}
    \tilde{\pi}(x) \geq \pi_b(x) e^{\beta r(x)} \sum_{i=1}^{N} m_i(x) \biggl( m_i(x) g_i(x) \\
    + \sum_{j \neq i} \E_{X \sim \pi_s^{(j)}}[m_j(X) g_j(X)] \biggr)^{-1}.
\end{multline}
\end{lemma}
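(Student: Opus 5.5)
The plan is to write the output probability exactly and then lower-bound it with Jensen's inequality, as in \cite[Lem.~1]{verdun2025}. Fix $x \in \supp \pi_b$. The output equals $x$ precisely when some index $S=i$ is selected and $X_1^{(i)} = x$. Conditioning on $X_1^{(i)} = x$, which happens with probability $\pi_s^{(i)}(x)$, the remaining samples $X_1^{(j)}$ for $j\neq i$ are still independent with laws $\pi_s^{(j)}$. This gives
\begin{IEEEeqnarray}{c}
    \tilde{\pi}(x) = \sum_{i=1}^{N} \pi_s^{(i)}(x)\, \E\biggl[ \frac{m_i(x) g_i(x)}{m_i(x) g_i(x) + \sum_{j\neq i} m_j(X^{(j)}) g_j(X^{(j)})} \biggr],
\end{IEEEeqnarray}
where the expectation is over independent $X^{(j)} \sim \pi_s^{(j)}$ for $j \neq i$.

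Terms with $\pi_s^{(i)}(x) = 0$ contribute zero. By \eqref{eqn:mis_req_2}, the corresponding $m_i(x)$ is also zero, so under the stated convention these terms contribute zero on the right-hand side of the lemma as well.

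For the remaining terms, with $\pi_s^{(i)}(x) > 0$, we use the definition \eqref{eqn:tilted_reward}:
\begin{IEEEeqnarray}{c}
    \pi_s^{(i)}(x)\, m_i(x) g_i(x) = m_i(x)\, \pi_b(x) e^{\beta r(x)}.
\end{IEEEeqnarray}
This factors out the prefactor $\pi_b(x)e^{\beta r(x)}$ and the weight $m_i(x)$ that appear in the statement. What is left inside the expectation is $1/(a + Y_i)$, with constant $a = m_i(x) g_i(x)$ and random $Y_i = \sum_{j\neq i} m_j(X^{(j)}) g_j(X^{(j)}) \geq 0$.

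The function $y \mapsto 1/(a+y)$ is convex on $y \ge 0$, so Jensen's inequality gives
\begin{IEEEeqnarray}{c}
    \E[1/(a+Y_i)] \ge \frac{1}{a+\E[Y_i]}.
\end{IEEEeqnarray}
By linearity and independence, $\E[Y_i] = \sum_{j\neq i}\E_{X\sim\pi_s^{(j)}}[m_j(X) g_j(X)]$. Substituting this bound term by term and summing over $i$ yields exactly the claimed inequality.

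The main obstacle is being careful with degenerate cases rather than the inequality itself. First, one must handle samples where $\pi_s^{(j)}(X^{(j)}) = 0$ is impossible; this is fine since each sample lies in the support of its own drafter. Second, one must treat points where $\pi_b = 0$ by setting $g_j = 0$ there. Third, one must deal with the event that all weights vanish, where the resampling step is ill-defined. This event cannot happen on the relevant terms, because $a > 0$ whenever $m_i(x) > 0$ and $\pi_s^{(i)}(x) > 0$. Finally, the expectation $\E[Y_i]$ is finite, because
\begin{IEEEeqnarray}{c}
    \E_{X \sim \pi_s^{(j)}}[m_j g_j] \le e^{\beta R}\sum_x m_j(x)\pi_b(x) \le e^{\beta R}
\end{IEEEeqnarray}
by \eqref{eqn:mis_req_1}.
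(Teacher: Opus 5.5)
Your proposal is correct and follows the paper's proof essentially step for step: the same expansion of $\tilde{\pi}(x)$ by conditioning on $X_1^{(i)}=x$, the same identity $\pi_s^{(i)}(x)m_i(x)g_i(x)=m_i(x)\pi_b(x)e^{\beta r(x)}$, and the same Jensen step on $y\mapsto 1/(a+y)$. The one point the paper checks explicitly that you only assert is that the right-hand-side terms with $\pi_s^{(i)}(x)=0$ really vanish rather than being $0\cdot\infty$. Keeping only the $X=x$ contribution in each expectation and using \eqref{eqn:mis_req_1}--\eqref{eqn:mis_req_2} gives $\sum_{j\neq i}\E_{X\sim\pi_s^{(j)}}[m_j(X)g_j(X)]\geq\pi_b(x)e^{\beta r(x)}\sum_{j=1}^{N}m_j(x)=\pi_b(x)e^{\beta r(x)}>0$ whenever $m_i(x)=0$; this also disposes of the terms with $m_i(x)=0<\pi_s^{(i)}(x)$, which your Jensen argument for $a>0$ sets aside.
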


Using Lemma~\ref{lem:mis_output_dist}, we can provide a bound on the KL divergence between  $\tilde{\pi}$ and the tilted distribution.
The proof is given in \ifpreprint \cref{sec:mis_kl_divergence_proof}. \else \cite[App.~B]{extendedversion}. \fi

\begin{theorem}\label{thm:mis_kl_divergence} 
For any valid MIS scheme, the KL divergence from the tilted distribution $\pi^\ast(x)$ satisfies
\begin{IEEEeqnarray}{c}
    D_{\mathrm{KL}}(\pi^\ast \Vert \tilde{\pi}) \leq \log\biggl( e^{2 \beta R} \sum_{x \in \mc{X}^\ast} \sum_{i=1}^{N} \frac{m_i(x)^2 \pi_b(x)^2}{\pi_s^{(i)}(x)} + 1 \biggr). \label{eqn:mis_kl_bound} \IEEEeqnarraynumspace
\end{IEEEeqnarray}
\end{theorem}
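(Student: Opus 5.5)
Starting from the definition $D_{\mathrm{KL}}(\pi^\ast \Vert \tilde{\pi}) = \E_{X\sim\pi^\ast}[\log(\pi^\ast(X)/\tilde{\pi}(X))]$, I will use Jensen's inequality (concavity of $\log$) to bound it by $\log \E_{X\sim\pi^\ast}[\pi^\ast(X)/\tilde{\pi}(X)]$. The remaining task is then to upper bound the ratio $\pi^\ast(x)/\tilde{\pi}(x)$ pointwise on $\supp \pi_b$. Lemma~\ref{lem:mis_output_dist} supplies the lower bound on $\tilde{\pi}$ needed for this.

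\textbf{Simplifying the denominators.} Write $A_i(x) = m_i(x) g_i(x) + \sum_{j\neq i} \E_{\pi_s^{(j)}}[m_j g_j]$. For each $j$,
\begin{equation*}
\E_{X\sim\pi_s^{(j)}}[m_j(X) g_j(X)] = \sum_x m_j(x) \pi_b(x) e^{\beta r(x)},
\end{equation*}
where the convention $m_j/\pi_s^{(j)}=0$ and requirement \eqref{eqn:mis_req_2} handle points outside $\supp \pi_s^{(j)}$. Since $r\le R$, each such term is at most $e^{\beta R}\sum_x m_j(x)\pi_b(x)$. Summing over $j$ with \eqref{eqn:mis_req_1} then gives $\sum_{j\ne i}\E[m_jg_j] \le e^{\beta R}$. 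Also note that $Z = \E_{\pi_b}[e^{\beta r}] \ge 1$, because $r \ge 0$.

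\textbf{Pointwise ratio bound.} By Jensen applied to the convex map $t\mapsto 1/t$ with weights $m_i(x)$, which sum to $1$,
\begin{equation*}
\Bigl(\sum_i m_i(x)/A_i(x)\Bigr)^{-1} \le \sum_i m_i(x) A_i(x).
\end{equation*}
Hence
\begin{equation*}
\frac{\pi^\ast(x)}{\tilde{\pi}(x)} \le \frac{1}{Z}\sum_i m_i(x) A_i(x) \le \sum_i m_i(x)^2 g_i(x) + e^{\beta R}.
\end{equation*}

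\textbf{Taking the expectation.} Take $\E_{\pi^\ast}$ of this bound, using $\pi^\ast(x) \le \pi_b(x)e^{\beta R}$ (from $Z\ge1$) and $g_i(x) \le e^{\beta R}\pi_b(x)/\pi_s^{(i)}(x)$. The first term gives at most
\begin{equation*}
e^{2\beta R}\sum_x\sum_i m_i(x)^2\pi_b(x)^2/\pi_s^{(i)}(x),
\end{equation*}
which is the main term in \eqref{eqn:mis_kl_bound}. The second term gives $e^{\beta R}$, but the stated bound has $+1$ there.

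\textbf{Main obstacle: the additive constant.} Obtaining $+1$ rather than $e^{\beta R}$ requires keeping $Z$ in the cross terms instead of discarding it. The plan is to bound
\begin{equation*}
\sum_{j\ne i}\E[m_jg_j] \le \sum_j \sum_x m_j(x)\pi_b(x)e^{\beta r(x)},
\end{equation*}
so that
\begin{equation*}
\sum_i m_i(x)\sum_{j\ne i}\E[m_jg_j] \le \sum_x \pi_b(x)e^{\beta r(x)} = Z.
\end{equation*}
After dividing by $Z$, this piece contributes exactly $1$, independent of $x$, and so contributes $1$ after taking $\E_{\pi^\ast}$. Only the diagonal terms $m_i^2 g_i/Z$ then need the crude $e^{\beta R}$ bounds. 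Because $Z\ge 1$, one can drop the $1/Z$ in front of $\pi^\ast$ only on this diagonal piece. The exponent $e^{2\beta R}$ then arises from $\pi^\ast(x)\le e^{\beta R}\pi_b(x)$ together with $e^{\beta r(x)} \le e^{\beta R}$ inside $g_i$.

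I expect this careful bookkeeping of $Z$, together with checking the zero-support conventions, to be the delicate step; the rest is routine.
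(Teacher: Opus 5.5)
Your proposal is correct. Once you switch to the refined bookkeeping in your last paragraph, it is essentially the paper's proof: you bound $\sum_{j\neq i}\E_{X\sim\pi_s^{(j)}}[m_j(X)g_j(X)]$ by $Z$ using \eqref{eqn:mis_req_1} instead of by $e^{\beta R}$, and otherwise use the same ingredients (Jensen on the logarithm, the lower bound of Lemma~\ref{lem:mis_output_dist}, the weighted arithmetic--harmonic mean inequality with weights $m_i(x)$, and $Z\ge 1$ with $e^{2\beta r(x)}\le e^{2\beta R}$ on the diagonal term). The only difference is cosmetic: you bound $\pi^\ast(x)/\tilde{\pi}(x)$ pointwise before taking the expectation, whereas the paper inserts the lemma inside the logarithm first and applies the mean inequality afterwards.
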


Moreover, one may be interested in bounding the expected reward degradation compared to sampling directly from $\pi^\ast$, i.e.\ how much the achieved reward is expected to decrease with respect to an idealized scenario in which exact samples from $\pi^\ast$ could be realized.
By applying Pinsker's inequality to the bound in Theorem~\ref{thm:mis_kl_divergence} in a similar approach to \cite{mroueh2025}, we arrive at the following result, with the proof found in \ifpreprint \cref{sec:mis_pinsker_proof}. \else \cite[App.~C]{extendedversion}. \fi

\begin{theorem}\label{thm:mis_pinsker}
The expected reward degradation satisfies
\begin{IEEEeqnarray}{rCl}
    \IEEEeqnarraymulticol{3}{l}{
        \E_{X \sim \pi^\ast}[r(X)] - \E_{X \sim \tilde{\pi}}[r(X)]
    }\nonumber\\* \
    & \leq & \frac{R}{\sqrt{2}} \biggl( \log\biggl( e^{2 \beta R} \sum_{x \in \mc{X}^\ast} \sum_{i=1}^{N} \frac{m_i(x)^2 \pi_b(x)^2}{\pi_s^{(i)}(x)} + 1 \biggr) \biggr)^{1/2}. \IEEEeqnarraynumspace
\end{IEEEeqnarray}
\end{theorem}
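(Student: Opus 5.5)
The plan is to combine Theorem~\ref{thm:mis_kl_divergence} with Pinsker's inequality and a variational bound on the difference of expectations of a bounded function. Write $\Delta = \E_{X \sim \pi^\ast}[r(X)] - \E_{X \sim \tilde{\pi}}[r(X)] = \sum_{x} r(x)\bigl(\pi^\ast(x) - \tilde{\pi}(x)\bigr)$. Both $\pi^\ast$ and $\tilde{\pi}$ are probability distributions on the countable set $\mc{X}^\ast$, so $\sum_x (\pi^\ast(x) - \tilde{\pi}(x)) = 0$. We may therefore subtract any constant $c$ from $r$ without changing $\Delta$.

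The first step is the elementary bound for a function with values in $[0,R]$. Split $\mc{X}^\ast$ into $A = \{x : \pi^\ast(x) > \tilde{\pi}(x)\}$ and its complement. On $A$ we have $r(x) \le R$, and on $A^c$ we have $r(x) \ge 0$. This gives
\begin{IEEEeqnarray}{c}
    \Delta \le R \sum_{x \in A} \bigl(\pi^\ast(x) - \tilde{\pi}(x)\bigr) = R \cdot \mathrm{TV}(\pi^\ast, \tilde{\pi}),
\end{IEEEeqnarray}
where $\mathrm{TV}(p,q) = \tfrac12 \sum_x |p(x) - q(x)| = \sup_A (p(A) - q(A))$. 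Equivalently, one can centre $r$ at $R/2$, so that $|r - R/2| \le R/2$, and use $\sum_x |\pi^\ast - \tilde{\pi}| = 2\,\mathrm{TV}$. Both routes give the constant $R$, not $2R$; getting this constant right is the only delicate point.

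The second step applies Pinsker's inequality, $\mathrm{TV}(\pi^\ast, \tilde{\pi}) \le \sqrt{D_{\mathrm{KL}}(\pi^\ast \Vert \tilde{\pi})/2}$. This holds for arbitrary distributions on countable spaces. If $D_{\mathrm{KL}}$ is infinite the bound is trivial, but Theorem~\ref{thm:mis_kl_divergence} already guarantees finiteness whenever its right-hand side is finite. Together the two steps give $\Delta \le \frac{R}{\sqrt{2}} \sqrt{D_{\mathrm{KL}}(\pi^\ast \Vert \tilde{\pi})}$.

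The final step substitutes the bound \eqref{eqn:mis_kl_bound} of Theorem~\ref{thm:mis_kl_divergence}, using that $t \mapsto \sqrt{t}$ is monotone on $[0,\infty)$. The argument of the logarithm is at least $1$, so the logarithm is nonnegative and the square root is well defined. This yields exactly the stated inequality.

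There is no real obstacle beyond checking the constant in the first step and noting that Pinsker is applied in the direction $D_{\mathrm{KL}}(\pi^\ast \Vert \tilde{\pi})$, which matches the direction bounded in Theorem~\ref{thm:mis_kl_divergence}. Pinsker is symmetric in its total-variation side, so either direction of the KL divergence would suffice.
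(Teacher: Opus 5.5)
Your proposal is correct and follows essentially the same route as the paper: bound the reward gap by $R\, D_{\mathrm{TV}}(\pi^\ast, \tilde{\pi})$, apply Pinsker's inequality, and substitute the bound of Theorem~\ref{thm:mis_kl_divergence}. The only cosmetic difference is that you get the first inequality by splitting on $\{x : \pi^\ast(x) > \tilde{\pi}(x)\}$, whereas the paper uses the variational representation of total variation with the test function $f = \frac{2}{R}(r - R/2)$, which is exactly the centring alternative you mention.
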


\subsection{Choices for the MIS Weights}
While the space of possible MIS weights is vast, we highlight two specific choices that have been widely adopted in the literature.

\subsubsection{The Choice of $m_i(X) = 1/N$}
While this choice does not necessarily respect the requirements \eqref{eqn:mis_req_1}--\eqref{eqn:mis_req_2} for a valid MIS scheme unless we impose the stricter coverage assumption that $\supp \pi_b \subseteq \supp \pi_s$ for all $i$, it remains popular due to its computational simplicity as neither additional proposal nor target evaluations are needed to compute the weights.
In the MIS taxonomy proposed in \cite{elvira2019}, this choice is referred to as scheme N1.
The KL bound in \eqref{eqn:mis_kl_bound} simplifies to
\begin{IEEEeqnarray}{c}
    D_{\mathrm{KL}}(\pi^\ast \Vert \tilde{\pi}) \leq \log \biggl( \frac{e^{2 \beta R}(\bar{\chi}^2 + 1)}{N} + 1 \biggr) \label{eqn:unif_kl_divergence}
\end{IEEEeqnarray}
where
\begin{IEEEeqnarray}{c}
    \bar{\chi}^2 = \frac{1}{N} \sum_{i=1}^{N} \chi^2(\pi_b \Vert \pi_s^{(i)})
\end{IEEEeqnarray}
is the mean $\chi^2$-divergence between the base and draft models.

\subsubsection{The Balance Heuristic}\label{sec:balance_heuristic}
Another popular choice is the \emph{balance heuristic}, also called scheme N3 in \cite{elvira2019}, which has long been known to be near-optimal when applying MIS to variance reduction in Monte Carlo integration \cite{veach1998}.
The balance heuristic uses $m_i(x) = \pi_s^{(i)}(x) / \sum_{j=1}^{N} \pi_s^{(j)}(x)$ and therefore requires significantly more computation since the probability of each sample must be evaluated under every proposal distribution.
We introduce the mixture distribution
\begin{IEEEeqnarray}{c}
    \psi(x) = \frac{1}{N} \sum_{i=1}^{N} \pi_s^{(i)}(x)
\end{IEEEeqnarray}
after which we can write $m_i(x) = \pi_s^{(i)}(x) / N \psi(x)$.
The KL divergence bound becomes
\begin{IEEEeqnarray}{c}
    D_{\mathrm{KL}}(\pi^\ast \Vert \tilde{\pi}) \leq \log \biggl( \frac{e^{2 \beta R} (\chi^2(\pi_b \Vert \psi) + 1)}{N} + 1 \biggr). \label{eqn:balance_kl_divergence}
\end{IEEEeqnarray}
Comparing \eqref{eqn:balance_kl_divergence} with \eqref{eqn:unif_kl_divergence} shows that the balance heuristic enjoys an upper bound that is at least as tight as the uniform weight scheme, by the convexity of the $\chi^2$ divergence.
In fact, we assert that it is the minimizer of the upper bound.

\begin{theorem}\label{thm:balance_optimality}
The balance heuristic minimizes the KL divergence bound in \eqref{eqn:mis_kl_bound} over all valid choices of MIS weights, provided that an equal number of samples is drawn from each draft model.
\end{theorem}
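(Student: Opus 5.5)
Since $\log$ is increasing and $e^{2\beta R}>0$ does not depend on the weights, minimizing the bound in \eqref{eqn:mis_kl_bound} is the same as minimizing
\begin{IEEEeqnarray}{c}
    F(m) = \sum_{x \in \mc{X}^\ast} \pi_b(x)^2 \sum_{i=1}^{N} \frac{m_i(x)^2}{\pi_s^{(i)}(x)}
\end{IEEEeqnarray}
over all valid weights, i.e.\ those satisfying \eqref{eqn:mis_req_1}--\eqref{eqn:mis_req_2}. Each $m_i$ may be any function of its argument $x$, so the constraints and the objective both decouple across $x$. The plan is therefore to minimize pointwise: for each fixed $x\in\supp\pi_b$, minimize $\sum_i m_i^2/\pi_s^{(i)}(x)$ subject to $\sum_i m_i = 1$, with $m_i = 0$ whenever $\pi_s^{(i)}(x)=0$. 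Points with $\pi_b(x)=0$ contribute nothing regardless of $m$. Indices with $\pi_s^{(i)}(x)=0$ are forced to have $m_i=0$ and contribute $0$ by the paper's convention. So the problem reduces to the active set $A(x)=\{i:\pi_s^{(i)}(x)>0\}$, which is nonempty because of the union coverage assumption.

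On $A(x)$, apply Cauchy--Schwarz:
\begin{IEEEeqnarray}{c}
    1 = \Bigl(\sum_{i\in A(x)} m_i\Bigr)^2 \le \Bigl(\sum_{i\in A(x)} \frac{m_i^2}{\pi_s^{(i)}(x)}\Bigr)\Bigl(\sum_{i\in A(x)} \pi_s^{(i)}(x)\Bigr).
\end{IEEEeqnarray}
This gives $\sum_i m_i^2/\pi_s^{(i)}(x) \ge 1/(N\psi(x))$. Equality holds exactly when $m_i \propto \pi_s^{(i)}(x)$, and after normalization this is $m_i(x)=\pi_s^{(i)}(x)/(N\psi(x))$, the balance heuristic. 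The balance heuristic automatically gives $m_i=0$ when $\pi_s^{(i)}(x)=0$, so it is valid. One could equally use a Lagrange multiplier, since the objective is strictly convex in $m$ on $A(x)$; this also gives uniqueness on $\supp\pi_b$.

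Summing over $x$, every valid scheme satisfies $F(m)\ge \sum_x \pi_b(x)^2/(N\psi(x)) = (\chi^2(\pi_b\Vert\psi)+1)/N$, and the balance heuristic attains this value. This also recovers \eqref{eqn:balance_kl_divergence}. Applying the monotone map $t\mapsto\log(e^{2\beta R}t+1)$ finishes the argument.

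The main subtlety is the role of the equal-allocation hypothesis. With $N$ proposals each sampled once, or $K$ drafters with $M$ samples each, the bound \eqref{eqn:mis_kl_bound} is written per sample. Repeated proposals then appear as identical terms, and the pointwise optimum stays proportional to the per-sample densities, which is still the balance heuristic. With unequal allocations the optimizer would weight by $n_k\pi_s^{(k)}$ instead, which is why the statement needs equal sample counts. I would note this explicitly, and also check that infinite sums over $\mc{X}^\ast$ cause no trouble, since all terms are nonnegative.
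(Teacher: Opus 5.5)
Your argument is correct, but it takes a different, if closely related, route from the paper.

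The paper uses a perturbation argument. It writes an arbitrary valid scheme as $m_i=\bar m_i+\delta_i$, where $\bar m_i$ is the balance heuristic, $\sum_i\delta_i(x)=0$ on $\supp\pi_b$, and $\delta_i(x)=0$ wherever $\pi_s^{(i)}(x)=0$. It then expands the quadratic $V$. The cross term vanishes because $\bar m_i(x)/\pi_s^{(i)}(x)=1/(N\psi(x))$ does not depend on $i$. This leaves the exact identity $V=\bar V+\Delta$, with $\Delta=\sum_x\sum_i\delta_i(x)^2\pi_b(x)^2/\pi_s^{(i)}(x)\geq 0$.

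Your pointwise Cauchy--Schwarz step replaces that orthogonality computation. The two routes buy different things:
\begin{itemize}
\item Your version gives the optimal value $1/(N\psi(x))$ and the equality case in one step. So \eqref{eqn:balance_kl_divergence} and uniqueness on $\supp\pi_b$ come for free.
\item The paper's version never computes the minimum. It does give the exact excess $\Delta$ incurred by any competing scheme, and it makes the mechanism visible: the balance heuristic equalizes $m_i/\pi_s^{(i)}$ across $i$, so every feasible perturbation is orthogonal to the gradient.
\end{itemize}
Neither argument needs $m_i\geq 0$.

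One correction to your final paragraph. The equal-allocation hypothesis is never used in your proof. Your claim that the statement needs equal counts is also not right for the paper's definition of the balance heuristic, which is per sample:
\begin{itemize}
\item A drafter $p_k$ sampled $N_k$ times appears as $N_k$ identical $\pi_s^{(i)}$.
\item So $\pi_s^{(i)}/\sum_j\pi_s^{(j)}$ already has denominator $\sum_l N_l p_l$.
\item The total weight on drafter $k$ is therefore $N_k p_k/\sum_l N_l p_l$.
\end{itemize}
That $N_k$-weighted form is the balance heuristic for unequal counts, not an alternative to it, and your Cauchy--Schwarz step proves its optimality verbatim. The paper says as much in its sample-allocation subsection. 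There it states that Theorems~\ref{thm:mis_kl_divergence} to~\ref{thm:balance_optimality} hold without modification, with $\psi_\alpha$ in place of $\psi$.
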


\begin{IEEEproof}
The proof makes use of a perturbation argument.
Let the balance heuristic weights be $\bar{m}_i(x)$.
For arbitrary valid MIS weights $m_i(x)$, we can always write $m_i(x) = \bar{m}_i(x) + \delta_i(x)$ and, as the weights must sum to one wherever $\pi_b(x) > 0$,
\begin{IEEEeqnarray}{c}
    \sum_{i=1}^{N} m_i(x) = \sum_{i=1}^{N} \bar{m}_i(x) + \sum_{i=1}^{N} \delta_i(x) \implies \sum_{i=1}^{N} \delta_i(x) = 0 \IEEEeqnarraynumspace
\end{IEEEeqnarray}
for any $x \in \mc{X}^\ast$ such that $\pi_b(x) > 0$.
Furthermore, by \eqref{eqn:mis_req_2}, $\delta_i(x) = 0$ for any such $x$ with $\pi_s^{(i)}(x) = 0$.
The only term in \eqref{eqn:mis_kl_bound} that depends on the choice of MIS weights is
\begin{IEEEeqnarray}{c}
    V \coloneq \sum_{x \in \mc{X}^\ast} \sum_{i=1}^{N} \frac{m_i(x)^2 \pi_b(x)^2}{\pi_s^{(i)}(x)}.
\end{IEEEeqnarray}
Substituting $m_i(x) = \bar{m}_i(x) + \delta_i(x)$ into $V$ gives
\begin{IEEEeqnarray}{rCl}
    V & = & \sum_{x \in \mc{X}^\ast} \sum_{i=1}^{N} \frac{(\bar{m}_i(x) + \delta_i(x))^2 \pi_b(x)^2}{\pi_s^{(i)}(x)} \\
    & = & \sum_{x \in \mc{X}^\ast} \sum_{i=1}^{N} \frac{\bar{m}_i(x)^2 \pi_b(x)^2}{\pi_s^{(i)}(x)} + 2 \sum_{x \in \mc{X}^\ast} \sum_{i=1}^{N} \frac{\bar{m}_i(x) \delta_i(x) \pi_b(x)^2}{\pi_s^{(i)}(x)} \nonumber \\
    && \< + \sum_{x \in \mc{X}^\ast} \sum_{i=1}^{N} \frac{\delta_i(x)^2 \pi_b(x)^2}{\pi_s^{(i)}(x)} \\
    & = & \bar{V} + 2 \sum_{x \in \mc{X}^\ast} \sum_{i=1}^{N} \frac{\bar{m}_i(x) \delta_i(x) \pi_b(x)^2}{\pi_s^{(i)}(x)} + \Delta
\end{IEEEeqnarray}
where $\bar{V}$ is the value of $V$ achieved by the balance heuristic and $\Delta \geq 0$.
Now, the formula for $\bar{m}_i(x)$ can be substituted into the second term to see that
\begin{IEEEeqnarray}{c}
    \sum_{i=1}^{N} \frac{\bar{m}_i(x) \delta_i(x) \pi_b(x)^2}{\pi_s^{(i)}(x)} = \frac{\pi_b(x)^2}{N \psi(x)} \sum_{i=1}^{N} \delta_i(x) = 0.
\end{IEEEeqnarray}
Thus we have shown that for any valid MIS scheme, $V \geq \bar{V}$.
However, we stress that minimizing the upper bound in \eqref{eqn:mis_kl_bound} does not mean that the KL divergence itself is minimized.
\end{IEEEproof}

\subsection{Optimizing the Sample Allocation}
Recent work has proposed training a router to interpret the prompt and dynamically decide which model should generate the samples in BoN framework with multiple LLMs \cite{geuter2025b}.
As an intuitive example, if a particular question involves mathematical reasoning, it should be assigned to a drafter fine-tuned on similar problems.
We can incorporate this notion into the MIS framework by adopting the established strategy of taking a different number of samples from each proposal distribution \cite{he2014,havran2014}, which can be viewed in our setting as a prior on which drafter will be more effective.
To be precise, let the total number of samples remain as $N$ but now we have $K \leq N$ draft models, where model $j$ contributes $N_j$ iid samples.

The results stated previously can immediately be extended to this new configuration, since it was not assumed that $\pi_s^{(1)}, \ldots, \pi_s^{(N)}$ had to be different.
To this end, let the proposal distributions be $p_1, \ldots, p_K$.
Let $\pi_s^{(i)} = p_j$ for all $N_{<j} + 1 \leq i \leq N_{<j} + N_j$, where $N_{<j} = \sum_{k < j} N_k$ is the total number of samples from all proposal distributions preceding $p_j$.
Theorems~\ref{thm:mis_kl_divergence} to~\ref{thm:balance_optimality} then all hold without modification.
Specifically, the tightest upper bound on the KL divergence remains \eqref{eqn:balance_kl_divergence} and is still achieved by the balance heuristic, with the mixture distribution now $\psi_\alpha(x) = \sum_{j=1}^{K} \alpha_j p_j(x)$.

Here, we let $\alpha = \begin{bmatrix} \alpha_1 & \cdots & \alpha_K \end{bmatrix}^T$ and note that in effect, $\alpha_j = N_j / N$ since $N_j$ samples are drawn from $p_j$.
The upper bound is minimized by making the $\chi^2$ divergence between the mixture distribution and the base model $\pi_b$ as small as possible; if we relax the problem by allowing $\alpha$ to take any value in the probability simplex, this becomes equivalent to the convex optimization problem
\begin{IEEEeqnarray}{lCl}
    \text{minimize} & \quad & \sum_{x \in \mc{X}^\ast} \frac{\pi_b(x)^2}{\sum_{j=1}^{K} \alpha_j p_j(x)} \\
    \text{subject to} & \quad & \sum_{i=1}^{K} \alpha_k = 1, \ \alpha_k \geq 0. 
\end{IEEEeqnarray}
However, the fixed uniform allocation described in \cref{sec:balance_heuristic} is likely to be of greater practical relevance.
First, the $\chi^2$ divergence over $\mc{X}^\ast$ cannot be computed exactly.
Second, the proposal and target distributions are expected to vary across different reasoning steps, making it difficult to apply adaptive importance sampling methods \cite{bugallo2017,he2014}.
Third, in many real-world systems the overall latency will be determined by that of the slowest model; if all drafters have a similar number of parameters, this will typically be whichever one has the largest batch size, favoring a uniform allocation.
Nevertheless, in heterogeneous environments where the drafters have different computational costs, varying the sample allocation may become useful.
We leave the study of this setting to future work.

\section{Experiments}\label{sec:experiments}
We perform experiments on the Qwen2.5 series \cite{qwen2.5} using Qwen2.5-Math \SI{7}{B} as the target model and Qwen2.5-Math, Instruct and Coder \SI{1.5}{B} as the draft models with a uniform sample allocation.
For GSI, all samples are from Qwen2.5-Math \SI{1.5}{B}.
The PRM is Qwen2.5-Math-PRM \SI{7}{B} \cite{zhang2025b}, which is fine-tuned on mathematics.
Each draft model is run on a separate L40S GPU, with 3 GPUs being used in total.
Our experiments focus on reasoning tasks and evaluate \SI{100} prompts each from the MMLU-Pro \cite{wang2024} and MATH-500 \cite{lightman2024} datasets, with each response capped at 40 steps and 4096 tokens.

\subsection{Hyperparameter Selection}
Following the choice made in \cite{geuter2025a}, we fix the KL regularization to be $\beta = 20$.
Moreover, for GSI, MIS-N1 and MIS-N3, we introduce a rejection threshold of $\alpha = 0.5$, which was also used in \cite{geuter2025a} to empirically increase reasoning performance.
If the tilted reward of the selected draft reasoning step is less than $\alpha$, we reject it and fall back to vanilla soft BoN sampling from the target model.
This step guards against rare cases where the draft model is misaligned with the target such that importance resampling does not reasonably approximate $\pi^\ast$.

\subsection{Results on Reasoning Tasks}
The primary metric for reasoning accuracy is the pass-at-$N$ rate, i.e.\ the fraction of prompts for which a correct answer is given when taking $N$ samples.
Across both datasets, test-time scaling with speculative inference increases downstream performance compared to one-shot reasoning using the target model alone, which achieves accuracies of 0.27 and 0.79 on MMLU-Pro and MATH-500 respectively.
As seen in \cref{tab:accuracy_results} however, increasing $N$ further provides relatively little gain, suggesting that sampling a small pool of reasoning steps is sufficient.
Even though the MIS strategies take only one-third as many samples compared to single-model GSI from the fine-tuned Qwen2.5-Math drafter, which is expected to align best with the chosen PRM and target model, they achieve comparable pass-at-$N$ rates across a variety of sample counts, with MIS-N3 in particular demonstrating strong performance.

The advantage of the MIS approach becomes more apparent when moving beyond the narrow domain of MATH-500 into the more diverse MMLU-Pro dataset.
While pass-at-$N$ rates can saturate due to imperfections in PRM-based scoring \cite{gao2023}, the tilted reward improves compared to GSI, as seen in \cref{fig:tilted_reward}, especially when $N$ is small.
MIS-N3 also consistently outperforms MIS-N1 in terms of tilted reward on MMLU-Pro, validating its theoretical optimality from Theorem~\ref{thm:balance_optimality} and demonstrating the advantage of combining diverse samples with principled weightings in more challenging reasoning settings.

\begin{table}[b]
    \centering
    \begin{tabular}{@{}rrrrrrr@{}}
        \toprule
        & \multicolumn{3}{c}{MMLU-Pro} & \multicolumn{3}{c}{MATH-500} \\
        $N$ & GSI & MIS-N1 & MIS-N3 & GSI & MIS-N1 & MIS-N3 \\
        \midrule
        3 & 0.40 & 0.39 & 0.42 & 0.88 & 0.84 & 0.82 \\ 
        6 & 0.40 & 0.42 & 0.42 & 0.86 & 0.85 & 0.85 \\
        12 & 0.44 & 0.49 & 0.45 & 0.86 & 0.86 & 0.85 \\
        24 & 0.49 & 0.45 & 0.48 & 0.87 & 0.84 & 0.86 \\
        48 & 0.44 & 0.43 & 0.52 & 0.89 & 0.82 & 0.89 \\
        96 & 0.47 & 0.42  & 0.44 & 0.89 & 0.84 & 0.87 \\
        192 & 0.52 & 0.51 & 0.44 & 0.87 & 0.83 & 0.87 \\
        \bottomrule
    \end{tabular}
    \caption{Pass-at-$N$ rates of the GSI, MIS-N1 and MIS-N3 speculative test-time scaling strategies.}
    \label{tab:accuracy_results}
    \vspace{-1ex}
\end{table}

\begin{figure}[t]
    \centering
    \begin{tikzpicture}
    \begin{axis}[
        xlabel={\footnotesize $N$},
        ylabel={\footnotesize Tilted reward},
        width=8cm, height=5.8cm,
        xmin=3, xmax=192,
        xtick distance=25,
        ytick distance=0.2,
        tick label style={font=\footnotesize},
        legend style={at={(0.98,0.02)}, anchor=south east, cells={anchor=west}, inner xsep=1pt, inner ysep=1pt, font=\scriptsize},
        legend image post style={xscale=0.8, yscale=0.8},
        legend columns=3,
        xlabel near ticks,
        ylabel near ticks,
        scaled y ticks=false,
        label style={inner sep=0pt},
        yticklabel style={/pgf/number format/fixed},
        grid=major,
    ]
        \addplot+[thick] coordinates {
            (3,-0.813305) (6,-0.517633) (12,0.509803) (24,0.632951) (48,0.758369) (96,0.784751) (192,0.840245)
        };
        \addlegendentry{GSI}
        \addplot+[thick] coordinates {
            (3,-0.182890) (6,0.347097) (12,0.518109) (24,0.667203) (48,0.769437) (96,0.827524) (192,0.864143)
        };
        \addlegendentry{MIS-N1}
        \addplot+[thick] coordinates {
            (3,-0.103452) (6,0.379495) (12,0.603013) (24,0.694392) (48,0.787447) (96,0.879565) (192,0.877220)
        };
        \addlegendentry{MIS-N3}
    \end{axis}
\end{tikzpicture}
    
    \vspace{1ex}
    
    \begin{tikzpicture}
    \begin{axis}[
        xlabel={\footnotesize $N$},
        ylabel={\footnotesize Tilted reward},
        width=8cm, height=5.8cm,
        xmin=3, xmax=192,
        xtick distance=25,
        ytick distance=0.1,
        tick label style={font=\footnotesize},
        legend style={at={(0.98,0.02)}, anchor=south east, cells={anchor=west}, inner xsep=1pt, inner ysep=1pt, font=\scriptsize},
        legend image post style={xscale=0.8, yscale=0.8},
        legend columns=3,
        xlabel near ticks,
        ylabel near ticks,
        scaled y ticks=false,
        label style={inner sep=0pt},
        yticklabel style={/pgf/number format/fixed},
        grid=major,
    ]
        \addplot+[thick] coordinates {
            (3,0.75074) (6,0.846238) (12,0.864989) (24,0.982205) (48,0.954558) (96,0.999855) (192,1.001937)
        };
        \addlegendentry{GSI}
        \addplot+[thick] coordinates {
            (3,0.740795) (6,0.712852) (12,0.871471) (24,0.936958) (48,0.949945) (96,1.009863) (192,1.010448)
        };
        \addlegendentry{MIS-N1}
        \addplot+[thick] coordinates {
            (3,0.688038) (6,0.895113) (12,0.918814) (24,0.920978) (48,1.008443) (96,0.996784) (192,1.035213)
        };
        \addlegendentry{MIS-N3}
    \end{axis}
\end{tikzpicture}
    \caption{Average tilted reward achieved by draft responses on MMLU-Pro (above) and MATH-500 (below).}
    \label{fig:tilted_reward}
    \vspace{-1em}
\end{figure}

\subsection{Comparative Runtime}
We show the average time per reasoning step in \cref{tab:reasoning_times} for the most computationally expensive settings of $N = 96$ and $N = 192$; the overhead compared to GSI ranges from \SI{3}{\percent} in the best case to a modest \SI{10}{\percent} at worst, indicating that MIS-N3 with the balance heuristic remains  viable as a test-time scaling framework despite requiring additional proposal evaluations.

\begin{table}[b]
    \centering
    \begin{tabular}{@{}rrrrrrr@{}}
        \toprule
        & \multicolumn{3}{c}{MMLU-Pro} & \multicolumn{3}{c}{MATH-500} \\
        $N$ & GSI & MIS-N1 & MIS-N3 & GSI & MIS-N1 & MIS-N3 \\
        \midrule
        96 & 9.93 & 9.41 & 10.19 & 10.69 & 11.49 & 11.81 \\
        192 & 16.55 & 15.60 & 17.48 & 15.41 & 15.73 & 16.86 \\
        \bottomrule
    \end{tabular}
    \caption{Seconds per step for speculative test-time-scaling.}
    \label{tab:reasoning_times}
\end{table}

\section{Conclusion}
We have extended PRM-guided speculative reasoning to an ensemble setting by introducing a principled framework based on MIS, which satisfies new theoretical guarantees on the KL divergence from the optimal tilted distribution.
Our approach bypasses the constraint that a single drafter's support must fully cover the target distribution and maintains competitive pass-at-$N$ rates with comparable inference speed.
The theoretical results for general MIS schemes and the balance heuristic lay a foundation for future work which may incorporate routers to intelligently allocate compute across diverse experts, explore sources of drafter diversity other than fine-tuned variants of the same base model, or expand the framework beyond multi-step reasoning into domains such as creative writing or code generation.

\section{Acknowledgments}
Resources used for this
research were provided in part by the Province of Ontario, the Government of Canada
through CIFAR, and companies sponsoring the Vector Institute (\url{www.vectorinstitute.ai/partnerships/current-partners/}).

\bibliography{bibliography}

\ifpreprint
\newpage
\onecolumn
\appendix
\renewcommand{\theequation}{A\arabic{equation}}
\renewcommand{\thefigure}{A\arabic{figure}}
\setcounter{equation}{0}
\setcounter{figure}{0}

\subsection{Proof of Lemma~\ref{lem:mis_output_dist}}\label{sec:mis_output_dist_proof}
The proof extends that of \cite[Lem.~1]{verdun2025}, and we begin by directly computing the probability of choosing $X_1 = x$ as the output, which is
\begin{IEEEeqnarray}{rCl}
    \tilde{\pi}(x) & = & \sum_{i=1}^{N} \Pr[X_1 = x, S = i] \\
    & = & \sum_{i=1}^{N} \Pr[X_1^{(i)} = x, S = i] \\
    & = & \sum_{i \in \mc{S}(x)} \Pr[X_1^{(i)} = x, S = i] \\
    & = & \sum_{i \in \mc{S}(x)} \pi_s^{(i)}(x) \Pr[S = i \mid X_1^{(i)} = x]. \label{eqn:expanded_output}
\end{IEEEeqnarray}
where we have defined $\mc{S}(x) = \{ i \in \{1, \ldots, N \} : \pi_s^{(i)}(x) > 0 \}$.
The analysis will be the same for each $i$, so let us focus on computing $\Pr[S = 1 \mid X_1^{(1)} = x]$.
\begin{IEEEeqnarray}{rCl}
    \IEEEeqnarraymulticol{3}{l}{
        \Pr[S = 1 \mid X_1^{(1)} = x]
    }\nonumber\\* \quad
    & = & \sum_{x_2 \in \mc{X}^\ast} \cdots \sum_{x_N \in \mc{X}^\ast} \Pr[S = 1 \mid X_1^{(1)} = x, X_1^{(2)} = x_2, \ldots, X_1^{(N)} = x_N] \Pr[X_1^{(2)} = x_2, \ldots, X_1^{(N)} = x_N] \IEEEeqnarraynumspace \\
    & = & \sum_{x_2 \in \mc{X}^\ast} \cdots \sum_{x_N \in \mc{X}^\ast} \frac{m_1(x) g_1(x)}{m_1(x) g_1(x) + \sum_{j=2}^{N} m_j(x_j) g_j(x_j)} \prod_{j=2}^{N} \pi_s^{(j)}(x_j) \\
    & = & \E\biggl[ \frac{m_1(x) g_1(x)}{m_1(x) g_1(x) + \sum_{j=2}^{N} m_j(X_1^{(j)}) g_j(X_1^{(j)})} \biggr].
\end{IEEEeqnarray}
Generalizing to $S = i$ for any $1 \leq i \leq N$ and substituting into \eqref{eqn:expanded_output},
\begin{IEEEeqnarray}{rCl}
    \tilde{\pi}(x) & = & \sum_{i \in \mc{S}(x)} \pi_s^{(i)}(x) \E\biggl[ \frac{m_i(x) g_i(x)}{m_i(x) g_i(x) + \sum_{j \neq i} m_j(X_1^{(j)}) g_j(X_1^{(j)})} \biggr] \\
    & = & \sum_{i \in \mc{S}(x)} \pi_s^{(i)}(x) m_i(x) g_i(x) \E\biggl[ \biggl( m_i(x) g_i(x) + \sum_{j\neq i} m_j(X_1^{(j)}) g_j(X_1^{(j)}) \biggr)^{-1} \biggr] \\
    & \stackrel{\text{(a)}}{\geq} & \sum_{i \in \mc{S}(x)} \pi_s^{(i)}(x) m_i(x) g_i(x) \biggl( m_i(x) g_i(x) + \sum_{j \neq i} \E_{X \sim \pi_s^{(j)}}[m_j(X) g_j(X)] \biggr)^{-1} \\
    & = & \pi_b(x) e^{\beta r(x)} \sum_{i \in \mc{S}(x)} m_i(x) \biggl( m_i(x) g_i(x) + \sum_{j \neq i} \E_{X \sim \pi_s^{(j)}}[m_j(X) g_j(X)] \biggr)^{-1} \label{eqn:intermediate_prob_bound}
\end{IEEEeqnarray}
where (a) is by Jensen's inequality.
Because we consider a valid MIS scheme according to the conditions in \eqref{eqn:mis_req_1}--\eqref{eqn:mis_req_2}, $m_i(x) = 0$ for all $i \notin \mc{S}(x)$; adopting the convention that in such cases $m_i(x) / \pi_s^{(i)}(x) = 0$ gives $m_i(x) g_i(x) = 0$.
Moreover, for such an $i$,
\begin{IEEEeqnarray}{rCl}
    \sum_{j \neq i} \E_{X \sim \pi_s^{(j)}}[m_j(X) g_j(X)] & \geq & \sum_{j \in \mc{S}(x)} \E_{X \sim \pi_s^{(j)}}[m_j(X) g_j(X)] \\
    & \geq & \pi_b(x) e^{\beta r(x)} \sum_{j \in \mc{S}(x)} m_j(x) \\
    & \stackrel{\text{(a)}}{=} & \pi_b(x) e^{\beta r(x)} > 0
\end{IEEEeqnarray}
where (a) is again by \eqref{eqn:mis_req_1}.
Therefore, expanding the summation in \eqref{eqn:intermediate_prob_bound} to include all $1 \leq i \leq N$ does not affect the result, which yields
\begin{IEEEeqnarray}{c}
    \tilde{\pi}(x) \geq \pi_b(x) e^{\beta r(x)} \sum_{i=1}^{N} m_i(x) \biggl( m_i(x) g_i(x) + \sum_{j \neq i} \E_{X \sim \pi_s^{(j)}}[m_j(X) g_j(X)] \biggr)^{-1}.
\end{IEEEeqnarray}

\subsection{Proof of Theorem~\ref{thm:mis_kl_divergence}}\label{sec:mis_kl_divergence_proof}
Similar to the approach used to prove \cite[Th.~1]{geuter2025a}, we start from the definition of the KL divergence.
To avoid dealing with terms where $\pi_s^{(i)}(x) = 0$, we use the form of Lemma~\ref{lem:mis_output_dist} in \eqref{eqn:intermediate_prob_bound}, where the sum is restricted to $\mc{S}(x) = \{ i \in \{1, \ldots, N \} : \pi_s^{(i)}(x) > 0 \}$.
Then,
\begin{IEEEeqnarray}{rCl}
    \IEEEeqnarraymulticol{3}{l}{
        D_{\mathrm{KL}}(\pi^\ast \Vert \tilde{\pi})
    }\nonumber\\* \quad 
    & = & \sum_{x \in \mc{X}^\ast} \pi^\ast(x) \log \frac{\pi^\ast(x)}{\tilde{\pi}(x)} \\
    & \stackrel{\text{(a)}}{\leq} & \sum_{x \in \mc{X}^\ast} \frac{\pi_b(x) e^{\beta r(x)}}{\E_{X \sim \pi_b}[e^{\beta r(X)}]} \log\biggl( \frac{\pi_b(x) e^{\beta r(x)}}{\E_{X \sim \pi_b}[e^{\beta r(X)}]} \biggl[ \pi_b(x) e^{\beta r(x)} \sum_{i \in \mc{S}(x)} m_i(x) \biggl( m_i(x) g_i(x) \nonumber \\
    && \< + \sum_{j \neq i} \E_{X \sim \pi_s^{(j)}}[m_j(X) g_j(X)] \biggr)^{-1} \biggr]^{-1} \biggr) \\
    & = & \sum_{x \in \mc{X}^\ast} \frac{\pi_b(x) e^{\beta r(x)}}{\E_{X \sim \pi_b}[e^{\beta r(X)}]} \log\biggl( \frac{1}{\E_{X \sim \pi_b}[e^{\beta r(X)}]} \biggl[ \sum_{i \in \mc{S}(x)} m_i(x) \biggl( m_i(x) g_i(x) + \sum_{j \neq i} \E_{X \sim \pi_s^{(j)}}[m_j(X) g_j(X)] \biggr)^{-1} \biggr]^{-1} \biggr) \IEEEeqnarraynumspace \\
    & \stackrel{\text{(b)}}{\leq} & \log\biggl( \sum_{x \in \mc{X}^\ast} \frac{\pi_b(x) e^{\beta r(x)}}{(\E_{X \sim \pi_b}[e^{\beta r(X)}])^2} \biggl[ \sum_{i \in \mc{S}(x)} m_i(x) \biggl( m_i(x) g_i(x) + \sum_{j \neq i} \E_{X \sim \pi_s^{(j)}}[m_j(X) g_j(X)] \biggr)^{-1} \biggr]^{-1} \biggr) \label{eqn:intermediate_kl}
\end{IEEEeqnarray}
where (a) is by Lemma~\ref{lem:mis_output_dist} and (b) is by Jensen's inequality.
Next we bound
\begin{IEEEeqnarray}{rCl}
    \sum_{j \neq i} \E_{X \sim \pi_s^{(j)}}[m_j(X) g_j(X)] & = & \sum_{j \neq i} \E_{X \sim \pi_s^{(j)}} \biggl[ \frac{m_j(X) \pi_b(X) e^{\beta r(X)}}{\pi_s^{(j)}(X)} \biggr] \\
    & = & \adjustlimits \sum_{j \neq i} \sum_{x \in \supp \pi_s^{(j)}} m_j(x) \pi_b(x) e^{\beta r(x)} \\
    & \stackrel{\text{(a)}}{\leq} & \sum_{j \neq i} \E_{X \sim \pi_b} [m_j(X) e^{\beta r(X)}] \\
    & = & \sum_{j=1}^{N} \E_{X \sim \pi_b}[m_j(X) e^{\beta r(X)}] - \E_{X \sim \pi_b}[m_i(X) e^{\beta r(X)}] \\
    & \leq & \E_{X \sim \pi_b} \biggl[ e^{\beta r(X)} \sum_{j=1}^{N} m_j(X) \biggr] \stackrel{\text{(b)}}{=} \E_{X \sim \pi_b}[e^{\beta r(X)}]
\end{IEEEeqnarray}
where (a) follows by expanding the inner summation to be over all $x \in \mc{X}^\ast$, as all terms involved are nonnegative, and (b) makes use of the first canonical requirement for a valid MIS scheme stated in \eqref{eqn:mis_req_1}.
Going back to \eqref{eqn:intermediate_kl} and the KL divergence,
\begin{IEEEeqnarray}{rCl}
    \IEEEeqnarraymulticol{3}{l}{
        D_{\mathrm{KL}}(\pi^\ast \Vert \tilde{\pi})
    }\nonumber\\* \quad 
    & \leq & \log\biggl( \sum_{x \in \mc{X}^\ast} \frac{\pi_b(x) e^{\beta r(x)}}{(\E_{X \sim \pi_b}[e^{\beta r(X)}])^2} \biggl[ \sum_{i \in \mc{S}(x)} m_i(x) \biggl( \frac{m_i(x) \pi_b(x) e^{\beta r(x)}}{\pi_s^{(i)}(x)} + \E_{X \sim \pi_b}[e^{\beta r(X)}] \biggr)^{-1} \biggr]^{-1} \biggr) \\
    & \stackrel{\text{(a)}}{\leq} & \log\biggl( \sum_{x \in \mc{X}^\ast} \frac{\pi_b(x) e^{\beta r(x)}}{(\E_{X \sim \pi_b}[e^{\beta r(X)}])^2} \sum_{i \in \mc{S}(x)} m_i(x) \biggl( \frac{m_i(x) \pi_b(x) e^{\beta r(x)}}{\pi_s^{(i)}(x)} + \E_{X \sim \pi_b}[e^{\beta r(X)}] \biggr) \biggr) \\
    & = & \log\biggl( \sum_{x \in \mc{X}^\ast} \frac{\pi_b(x) e^{\beta r(x)}}{(\E_{X \sim \pi_b}[e^{\beta r(X)}])^2} \biggl( \sum_{i \in \mc{S}(x)} \frac{m_i(x)^2 \pi_b(x) e^{\beta r(x)}}{\pi_s^{(i)}(x)} + \E_{X \sim \pi_b}[e^{\beta r(X)}] \biggr) \biggr) \\
    & \stackrel{\text{(b)}}{\leq} & \log\biggl( e^{2 \beta R} \sum_{x \in \mc{X}^\ast} \sum_{i \in \mc{S}(x)} \frac{m_i(x)^2 \pi_b(x)^2}{\pi_s^{(i)}(x)} + 1 \biggr).
\end{IEEEeqnarray}
where (a) follows by the weighted arithmetic-harmonic mean inequality \cite{steele2004} and (b) uses the fact that $e^{\beta r(x)} \geq 1$ for all $x \in \mc{X}^\ast$.
While it is permitted that some of the $m_i(x)$'s be zero even when $i \in \mc{S}(x)$, for any valid MIS scheme the subset in which each $m_i(x) > 0$ must nevertheless form a partition of unity wherever $x \in \supp \pi_b$, as required by \eqref{eqn:mis_req_1}--\eqref{eqn:mis_req_2}.
Therefore the weighted arithmetic-harmonic mean inequality still holds, as the sum can be safely restricted to this subset.
Applying our convention that $m_i(x) / \pi_s^{(i)}(x) = 0$ when both the numerator and denominator are zero finally affords the form of the bound in \eqref{eqn:mis_kl_bound}.

\subsection{Proof of Theorem~\ref{thm:mis_pinsker}}\label{sec:mis_pinsker_proof}
To start, we have the following variational representation of the total variation distance between measures $P$ and $Q$ on $\mc{X}^\ast$, see for example \cite[Th.~7.7]{polyanskiy2023}:
\begin{IEEEeqnarray}{c}
    D_{\mathrm{TV}}(P, Q) = \frac{1}{2} \sup_{f \in \mc{F}}[\E_{X \sim P}[f(X)] - \E_{X \sim Q}[f(X)]]
\end{IEEEeqnarray}
where $\mc{F} = \{ f : \mc{X}^\ast \to \mathbb{R}, \norm{f}_\infty \leq 1 \}$.
Since $0 \leq r(x) \leq R$, we use the function
\begin{IEEEeqnarray}{c}
    f(x) = \frac{2}{R} \biggl( r(x) - \frac{R}{2} \biggr), \ x \in \mc{X}^\ast
\end{IEEEeqnarray}
which has $\norm{f}_\infty = 1$.
We get
\begin{IEEEeqnarray}{rCl}
    \IEEEeqnarraymulticol{3}{l}{
        \E_{X \sim \pi^\ast}[f(X)] - \E_{X \sim \tilde{\pi}}[f(X)]
    }\nonumber\\* \quad
    & = & \frac{2}{R} \sum_{x \in \mc{X}^\ast} \biggl( \biggl( r(x) - \frac{R}{2} \biggr) \pi^\ast(x) -\biggl( r(x) - \frac{R}{2} \biggr) \tilde{\pi}(x) \biggr) \\
    & = & \frac{2}{R} (\E_{X \sim \pi^\ast}[r(X)] - \E_{X \sim \tilde{\pi}}[r(X)]) + \sum_{x \in \mc{X}^\ast} (\tilde{\pi}(x) - \pi^\ast(x)) \\
    & = & \frac{2}{R} (\E_{X \sim \pi^\ast}[r(X)] - \E_{X \sim \tilde{\pi}}[r(X)]) \\
    & \leq & 2 D_{\mathrm{TV}}(\pi^\ast, \tilde{\pi}).
\end{IEEEeqnarray}
Therefore, applying Pinsker's inequality,
\begin{IEEEeqnarray}{c}
    \E_{X \sim \pi^\ast}[r(X)] - \E_{X \sim \tilde{\pi}}[r(X)] \leq R D_{\mathrm{TV}}(\pi^\ast, \tilde{\pi}) \leq R \sqrt{\frac{1}{2} D_{\mathrm{KL}}(\pi^\ast \Vert \tilde{\pi})}.
\end{IEEEeqnarray}
Bounding $D_{\mathrm{KL}}(\pi^\ast \Vert \tilde{\pi})$ using Theorem~\ref{thm:mis_kl_divergence} then gives the desired bound.
\fi

\end{document}